\newcommand{\minfo}[1]{\mathbf{I}\left(#1\right)}
\newcommand{\eg}{e.g.\@\xspace}
\newcommand{\ie}{i.e.\@\xspace}
\newcommand{\sect}{Sect.\@\xspace}
\newcommand{\Prob}[1]{\ensuremath{P\left(#1\right)}}
\newcommand{\etal}{{\em et al}.\@\xspace}
\newcommand{\textbif}[1]{\textbf{\textit{#1}}}
\newcommand{\foottext}[1]{{\let\thefootnote\relax\footnote{#1}}}
\newtheorem{theorem}{Theorem}[section]
\newtheorem{lemma}[theorem]{Lemma}
\tikzstyle{latent} = [circle,fill=white,draw=black,inner sep=2pt,
\tikzstyle{latent_big} = [latent, minimum size=40pt]
\tikzstyle{obs} = [latent,fill=gray!25]
\tikzstyle{obs_big} = [obs, minimum size=45pt]
\tikzstyle{const} = [rectangle, draw=black, inner sep=2pt, node distance=1, fill=gray!25, minimum size=15pt]
\tikzstyle{blank} = [rectangle, inner sep=0pt, node distance=1]
\tikzstyle{factor} = [rectangle, fill=black, minimum size=5pt, inner
\tikzstyle{det} = [latent, diamond]
\tikzstyle{plate} = [draw, rectangle, rounded corners, fit=#1, inner sep=5pt]
\tikzstyle{wrap} = [inner sep=0pt, fit=#1]
\tikzstyle{gate} = [draw, rectangle, dashed, fit=#1]
\tikzstyle{caption} = [font=\footnotesize, node distance=0] %
\tikzstyle{plate caption} = [caption, node distance=0, inner sep=0pt,
\tikzstyle{factor caption} = [caption] %
\tikzstyle{every label} += [caption] %
\newcommand{\edge}[3][]{ %
  \foreach \x in {#2} { %
    \foreach \y in {#3} { %
      \path (\x) edge [->, >={triangle 45}, #1] (\y) ;%
    } ;
  } ;
}
\newcommand{\plate}[4][]{ %
  \node[wrap=#3] (#2-wrap) {}; %
  \node[plate caption=#2-wrap] (#2-caption) {#4}; %
  \node[plate=(#2-wrap)(#2-caption), #1] (#2) {}; %
}
\newacronym{nam}{DS}{Dawid-Skene}
\newacronym{isac}{ISAR}{Inter-Schema AdapteR}
\newacronym{cpt}{CPT}{Conditional Probability Table}
\newacronym{map}{MAP}{Maximum-A-Posteriori}
\newacronym{mle}{MLE}{Maximum-Likelihood}
\newacronym{em}{EM}{Expectation Maximisation}
\newacronym{tl}{TL}{Transfer Learning}
\newacronym{mtl}{MTL}{Multi-Task Learning}
\newacronym{hmlc}{HMlC}{Hierarchical Multi-label Classification}
\newacronym{mrc}{MRC Harwell}{the Medical Research Council, Harwell Institute, Oxfordshire}
\newacronym{nis}{NIS}{Not In Schema}
\newacronym{mar}{MAR}{Missing at Random}
\newacronym{dof}{DoF}{Degrees of Freedom}
\newacronym{mi}{MI}{Mutual Information}
\newacronym{iid}{IID}{independent and identically distributed}
\newacronym{rad}{RAD}{Relative Absolute Deviation}
\title{The Extended Dawid-Skene Model:\\{\Large Fusing Information from Multiple Data Schemas}}
\author{
  Michael P. J. Camilleri \\
  School of Informatics\\
  University of Edinburgh \\
   \And
 Christopher K. I. Williams \\
  School of Informatics\\
  University of Edinburgh\\
}
\begin{document}
\maketitle

\begin{abstract}
While label fusion from multiple noisy annotations is a well understood concept in data wrangling (tackled for example by the \acrfull{nam} model), we consider the extended problem of carrying out learning when the labels themselves are not consistently annotated with the same schema.
We show that even if annotators use disparate, albeit related, label-sets, we can still draw inferences for the underlying full label-set.
We propose the \acrfull{isac} to translate the fully-specified label-set to the one used by each annotator, enabling learning under such heterogeneous schemas, without the need to re-annotate the data.
We apply our method to a mouse behavioural dataset, achieving significant gains (compared with \acrshort{nam}) in out-of-sample log-likelihood (-3.40 to -2.39) and F1-score (0.785 to 0.864).
\end{abstract}

\keywords{Multi-schema learning \and Crowdsourcing \and Annotations \and Behavioural characterisation \and Probabilistic modelling \and Data wrangling} 

\foottext{This article has been published in P.\ Cellier and K.\ Driessens (Eds.): ECML PKDD 2019 Workshops, CCIS 1167, pp. 121 -- 136, 2020. The final authenticated publication is available online at \url{https://doi.org/10.1007/978-3-030-43823-4_11}}

\section{Introduction}
\label{S_INTRODUCTION}

Machine learning is based on learning from examples \cite{MISC_023}.
This often requires humans annotation, \eg class labels for image classes in ImageNet \cite{MISC_015}.
However, human labelling is error prone and consequently, methods such as the \gls{nam} model \cite{AMD_012} have been developed to estimate individual error rates and draw inferences on the true label from multiple annotators, see \eg \cite{AMD_019, AMD_017}.

In this paper, we are interested in the extended problem of carrying out such learning when the annotations have been carried out under different schemas, and in so doing, help to automate the data wrangling and cleaning portion of data science.
Given a `complete' set of possible labels, we consider the scenario where the annotations for different samples are performed using different subsets (schemas) of this `complete' label-set.
A schema can be obtained, for example, by aggregating labels together to produce fewer, coarser labels, or by singling out one label to annotate and lumping all the others together (\ie `One-vs-Rest').
This is a common data wrangling problem in scientific analysis where the actual nature of the research question is being formulated: for example, in labelling animal behaviour, scientists may realise half-way through data collection that a certain activity is rich enough that it warrants splitting into multiple labels.
Alternatively, due to the expertise of certain annotators, they may be directed to focus on specific subsets of activity, and clumping all others.

The challenge we address here is how to draw inferences about the underlying complete label-set, despite being provided with annotations which make use of different labelling schemas.
Normally, this would not be possible without re-annotating the entire data-set (which is often expensive) or simply discarding older data (which is wasteful in limited data scenarios).
Our contribution is to show that with the appropriate formulation, learning from all the data can indeed be achieved by adding an \gls{isac} which allows us to translate the full label-set to the one used by a given annotator.
Furthermore, we demonstrate the applicability and effectiveness of our method for behavioural annotation, using both simulated and actual data.

The rest of our paper is structured as follows. In \sect \ref{S_PROBLEM_DEFINITION} we define the data wrangling problem we tackle and propose our solution, and then compare our approach to related work (\sect \ref{S_RELATED_WORK}). Subsequently we describe a concrete problem which motivated our model in \sect \ref{S_DATA_DESCRIPTION}, and in \sect \ref{S_EXPERIMENTS} report experimental results under various scenarios. We conclude with a discussion of the merits of the model and proposed future extensions.

\section{Problem and Model Definition}
\label{S_PROBLEM_DEFINITION}

We start by defining a `complete' set of labels $L = \lbrace1, ..., |L|\rbrace$ encompassing all possible classes/feature values, which we will refer to as the `full label-set'.
However, we consider the case where the observations are drawn from a reduced sub-set of $L$.
That is, given $|S|$ different label-sets/schemas, denoted $L_s$ for $s \in \lbrace 1, ..., |S|\rbrace$, different samples are labelled according to different schemas.
Each $L_s$ may contain labels from $L$ and/or groupings thereof, as illustrated in Fig. \ref{FIG_SCHEMA_EXAMPLE}.

\begin{figure}
	\centering
	\includegraphics[width=0.8\textwidth]{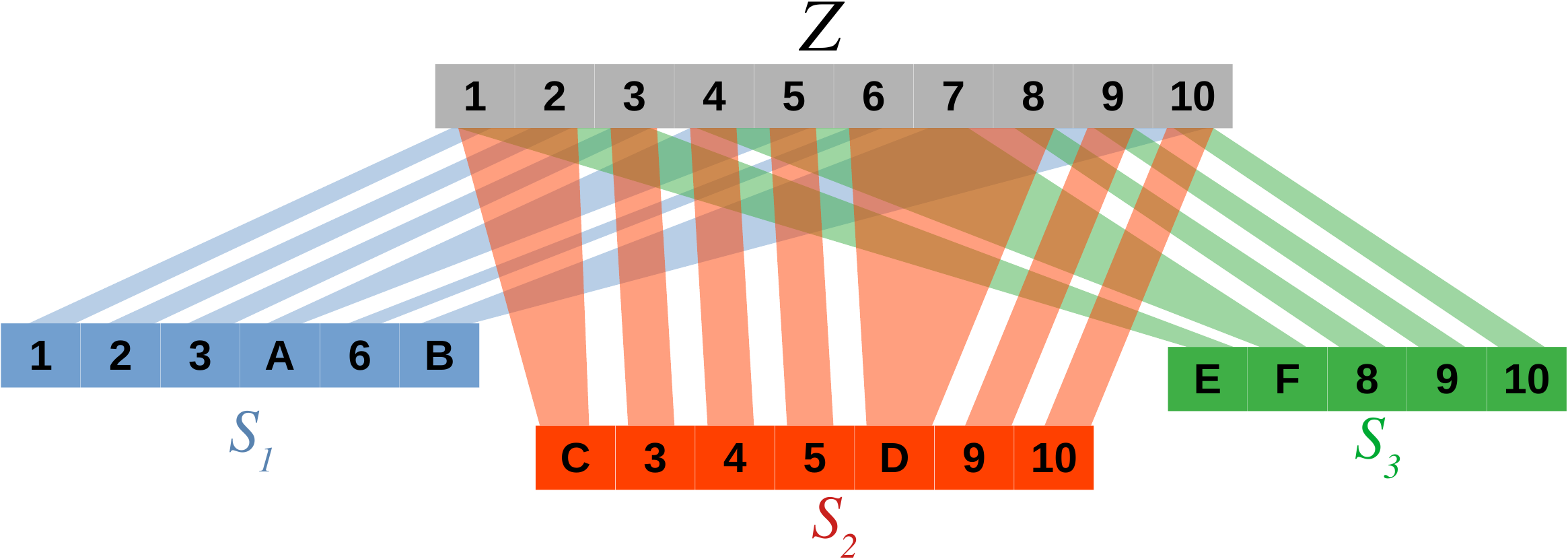}
  	\caption{An example scenario with ten labels, and three schemas (colour-coded), showing how super-labels (enumerated \textbf{A} through \textbf{F}) are constructed from the full label-set. Note that while the above super-labels encompass contiguous labels, this is only for clarity and need not be the case.}
  	\label{FIG_SCHEMA_EXAMPLE}
\end{figure}

To motivate our problem consider the task of documenting the behaviour of an individual according to a discrete set of labels.
A number of annotators are tasked to do this, but their annotations are not restricted to a single schema.
Our aim is to collate their labels so that we get a posterior belief over the true behavioural state, and to do so while constructing a global rather than a single model per-schema, thus sharing statistical strength across the entire data-set.
In what follows, we first describe the \gls{nam} model, which can be used to solve the problem under the constraint of a single schema, and then show how using \gls{isac} we can achieve an Extended \gls{nam} model for dealing with multiple schemas.

\subsection{Model Definition}

The standard \gls{nam} model appears in Fig. \ref{FIG_DS}.
The categorical variable $Z$ represents the true behaviour of the individual, and is parametrised by the prior $\pi$ over the full label-set (indexed by $z$).
$U_k$ is the \emph{observed} annotation provided by each annotator $k$, and models the observer error-rate through a \gls{cpt} ($|U|=|Z| = |L|$):
\begin{equation}
\psi_{k,u,z} \equiv \Prob{U_{k} = u | Z = z} , \label{EQ_PSI_DEFINITION}
\end{equation}
where the subscripts indicate indexing in the respective dimension.

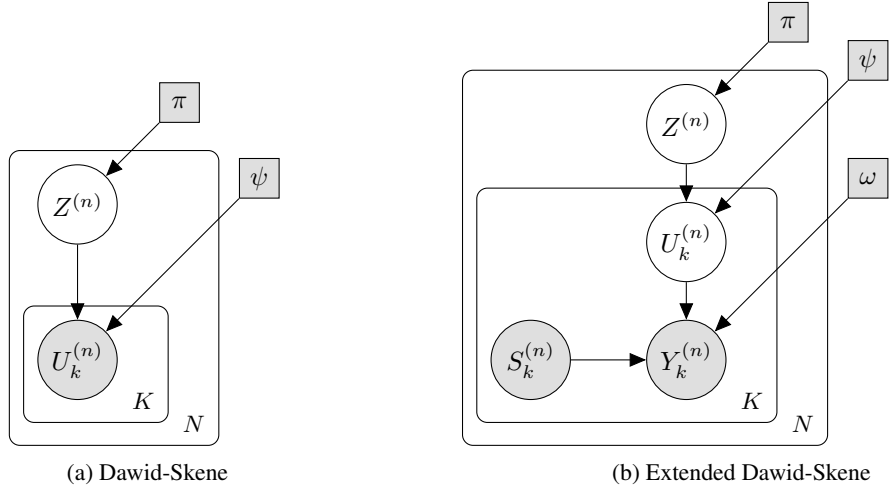
\begin{figure}
	\centering
	\begin{subfigure}[b]{0.4\textwidth}
		\centering

\begin{tikzpicture}

  \node[latent]                				(zn) 	{$Z^{(n)}$};
  \node[obs, below=of zn] 		    		(ukn) 	{$U^{(n)}_{k}$};
  \node[const, above right=of zn]			(pi)  	{$\pi$};
  \node[const, above right=2.5cm of ukn] 	(psi)   {$\psi$};

  \edge {zn} 	{ukn};
  \edge {pi} 	{zn};
  \edge {psi} 	{ukn};

  \plate {pK} {(ukn)}		{$K$};
  \plate {pN} {(pK)(zn)} 	{$N$};

\end{tikzpicture}
  		\caption{\acrlong{nam}}
  		\label{FIG_DS}
  	\end{subfigure}
  	\hspace{2em}
  	\begin{subfigure}[b]{0.45\textwidth}

\begin{tikzpicture}

  \node[latent]                		(zn) 	{$Z^{(n)}$};
  \node[latent, below=0.5cm of zn] 		(ukn) 	{$U^{(n)}_{k}$};
  \node[obs, below=0.5cm of ukn]			(ykn)	{$Y^{(n)}_{k}$};
  \node[obs, left=1.0cm of ykn]			(sn)	{$S^{(n)}_{k}$};
  \node[const, above right=of zn]	(pi)  	{$\pi$};
  \node[const, above right=2.5cm of ukn] 	(psi)   {$\psi$};
  \node[const, above right=2.5cm of ykn]	(omega)	{$\omega$};

  \edge {zn} 	{ukn};
  \edge {ukn}	{ykn};
  \edge {sn}	{ykn};
  \edge {pi} 	{zn};
  \edge {psi} 	{ukn};
  \edge {omega}	{ykn};

  \plate {pK} {(ykn)(ukn)(sn)}	{$K$};
  \plate {pN} {(pK)(zn)} 	{$N$};

\end{tikzpicture}
  	\caption{Extended \acrlong{nam}}
  	\label{FIG_ISAC}
  	\end{subfigure}
  	\caption{Multi-Annotator Label fusion with the (a) \acrshort{nam} and (b) Extended \acrshort{nam} (using \acrshort{isac} adapter) models.}
  	\label{FIG_ISAC_VS_DS}
\end{figure}

In our setup (Fig. \ref{FIG_ISAC}), $U_k$ is `corrupted' by the schema: \ie we only observe $Y_k$ whose domain is conditioned by the schema $S_k$.
$Y_k$ is another discrete variable representing the annotator's assigned label contingent on which schema $S^{(n)}_k$ is currently active: \ie $|Y^{(n)}_k| = |L_{S^{(n)}_k}|$.
We assume knowledge of $S^{(n)}_k$, a valid assumption in our application domain.
The mapping from $U$ to $Y$ (conditioned on $S$) is modelled by the \acrfull{isac} \gls{cpt} $\omega$:
\begin{equation}
\omega_{y,u,s} = \Prob{Y = y | U = u, S = s} . \label{EQ_OMEGA_DEFINITION}
\end{equation}
For our purposes, $\omega$ is \emph{fixed} and \emph{deterministic}: \ie all entries are either 1 or 0, and encode expert knowledge about which labels in $L$ map to the same $L_s$.
This gives a very intuitive way to construct the mapping, as governed by:
\begin{equation}
\omega_{y,u,s} = \begin{cases}
 1 & \text{if $u$ is one of the states captured by $y$ under $s$ ,} \\
 0 & \text{otherwise}.
\end{cases}
\end{equation}
If we assume a one-hot-encoding of the variables (such that a particular manifestation is indicated by indexing the respective variable dimension), we can represent our model by the following joint distribution:
\begin{align}
& P(Z,U,Y|S;\Theta, \omega) = \prod_{n=1}^N\prod_{z=1}^{|L|}\left(\pi_z\prod_{k=1}^K\prod_{u=1}^{|L|}
\left\lbrace \psi_{k,u,z} M_{\omega}^{(n)}\left(k,u\right) \right\rbrace^{U^{(n)}_{k,u}}\right)^{Z^{(n)}_z} , \label{EQ_ISAC_LIKELIHOOD}
\end{align}
where $\Theta = \lbrace\pi,\psi\rbrace$, and we have defined the \gls{isac} message:
\begin{align}
M_{\omega}^{(n)}\left(k,u\right) \equiv \prod_{s=1}^{|S|} \left[ \prod_{y=1}^{|Y^{(n)}|} \omega_{y,u,s}^{Y_{k,y}^{(n)}} \right]^{S_{k,s}^{(n)}} . \label{EQ_ISAC_MSG_OMEGA}
\end{align}
Despite the dependence of $Y_k^{(n)}$ on $S^{(n)}_k$, we can standardise the annotator labels using a super-space $Y$ which encapsulates all the labels in the full label-set as well as any valid groupings thereof, as indicated by Lemma \ref{LEM_SUPER_SPACE} (see Appendix).

The proposed architecture allows the parameters $\Theta$ to model the data-generating process, while the inter-schema differences are captured by the emission probabilities $\omega$.
In doing so, we incorporate knowledge about the schema mapping, specifically as to which labels will map to which super-labels without effecting estimation of reliability metrics.
It is important to note that $\omega$ is annotator-independent, which reduces the model dimensionality and forces all inter-annotator variability to be incorporated in $\psi$.
Due to the \gls{isac} adapter, the Extended \gls{nam} model is able to infer more accurate statistics about the distribution of the full label-set, even in cases where the signal is very sparse (such as one vs rest schemas, see \ref{SS_PARAM_ESTIM} below).
Despite being deterministic, $\omega$ does not preclude multiple latent states mapping to any super-label, and hence, the model is rich enough to capture the inherent uncertainty over the latent state.

\subsection{Training the Model}
Training the model involves learning the parameters $\pi$ and $\psi$ ($\omega$ is fixed).
We add a \emph{Log Prior} to the log of the joint likelihood (Eq. \ref{EQ_ISAC_LIKELIHOOD}), and compute \gls{map} rather than \gls{mle} estimates for $\pi$ and $\psi$, thus reducing the risk of overfitting. 
We use conjugate Dirichlet priors:
\begin{align*}
Dir(\pi|\mathbf{\alpha}^\pi) \equiv & \frac{1}{\bm{\beta}(\alpha^\pi)}\prod_{z=1}^{|Z|}\pi_z^{\alpha^\pi_z-1} , \qquad\qquad Dir(\psi_{k,z}|\mathbf{\alpha}^\psi_{k,z}) \equiv \frac{1}{\bm{\beta}(\alpha^\psi_{k,z})} \prod_{u=1}^{|U|}\psi_{k,u,z}^{\alpha^\psi_{k,u,z}-1} .
\end{align*}

We derive an \gls{em} algorithm \cite{AMD_015} to infer the parameters.
During the E-step, we need to compute two expectations:
\begin{equation}
\gamma^{(n)}_z \equiv \left\langle Z^{(n)}_z \right\rangle = \frac{\pi_z M_{\psi}^{(n)}(z) } {\sum\limits_{z'=1}^{|Z|} \pi_{z'} M_{\psi}^{(n)}(z') } ,
\end{equation}
and
\begin{equation}
\rho^{(n)}_{k,u,z} \equiv \left\langle Z^{(n)}_z U^{(n)}_{k,u} \right\rangle = \frac{\pi_z M_{\omega}^{*(n)}(k,u,z) M_{\psi}^{*(n)}(k, z)  } {\sum\limits_{z'=1}^{|Z|}\left( \pi_{z'} M_{\psi}^{(n)}(z') \right) } ,
\end{equation}
where $M_{\omega}^{(n)}$ is as defined before (Eq. \ref{EQ_ISAC_MSG_OMEGA}), with messages:
\begin{align}
M_{\omega}^{*(n)}(k,u,z) & \equiv \ \psi_{k,u,z} M_\omega^{(n)}(k,u) \label{EQ_ISAC_MSG_XI_START} ,\\
M_{\psi}^{(n)}(z) \quad & \equiv \ \prod_{k'=1}^{K}\sum_{u'=1}^{|U|}M_{\omega}^{*(n)}(k',u',z) \label{EQ_ISAC_MSG_PSI} , \\
M_{\psi}^{*(n)}(k, z) & \equiv \frac{M_{\psi}^{(n)}(z)}{\sum_{u=1}^{|U|}M_{\omega}^{*(n)}(k,u,z)} \label{EQ_ISAC_MSG_PSI_STAR} .
\end{align}
The M-Step involves maximising the expected complete data log-likelihood with respect to each of the unknown parameters $\pi$ and $\psi$:
\begin{equation}
\hat\pi_z = \frac{\sum_{n=1}^N\gamma^{(n)}_{z} + \alpha^\pi_z - 1} {N + \sum_{z'=1}^{|Z|}\alpha^\pi_{z'} - |Z|} , \label{EQ_ISAC_PI_MAP}
\end{equation}
and
\begin{equation}
\hat{\psi}_{k,u,z} = \frac{  \sum_{n=1}^N\rho_{k,u,z}^{(n)} + \alpha^\psi_{k,u,z} - 1  }{  \sum_{n=1}^N\sum_{u'=1}^{|U|}\rho_{k,u',z}^{(n)} + \sum_{u'=1}^{|U|}\alpha^\psi_{k,u',z} - |U|  } . \label{EQ_ISAC_PSI_MAP}
\end{equation}
The full derivations are available in \cite{MISC_009}.

As regards computational complexity, we note that the \gls{isac} adapter acts as a message function in graphical modelling terms, and given that $\omega$ is fixed and both $Y$ and $S$ are observed, $M_\omega$ can be computed once and used throughout the optimisation: moreover, being a merely indexing operation, it is linear in the number of samples.
As regards estimation of the other parameters, each \gls{em} step scales linearly in the number of samples and annotators, and quadratically in the size of the super-schema.

\section{Related Work}
\label{S_RELATED_WORK}

Our approach towards automating information fusion is concerned with probabilistic inference from incompletely specified data.
In this respect, \gls{isac} is related to the general \gls{tl} field, specifically in learning across feature-spaces.
Our solution is novel in that it is applied to an `unsupervised' learning scenario, and rather than focusing on learning the mapping between feature spaces -- refer to \cite{TML_013} for a review -- we take the problem one step further through our interpretation of the different schemas (using domain knowledge about the specific problem), which allows us to collate information across label-spaces in an efficient manner.

Another perspective comes from \gls{mtl} \cite{TML_012}.
To relate to this literature we can view each label schema as a ``task'', but the analogy is not perfect.
In multi-task learning one aims to improve the learning of a model for each task by using knowledge contained in all or some of the tasks, while in our case, we typically consider a single task but the feature-space is only partially observed (by way of the schema). 
We do share a similar goal of sharing statistical strength across schemas (rather than across tasks): by using \gls{isac} we seek to \emph{fuse} the information from all annotators (who may be using different schemas) in order to draw inferences for the `complete' label-set (rather than one `task' at a time), and hence is a step beyond the standard \gls{mtl} setting.

Our schema mapping can be viewed as ``data coarsening'' as discussed in \cite{AMD_023}: however, our problem setup is different and applied to categorical rather than continuous data.
Cout \etal \cite{AMD_029} have addressed a similar problem, using a discriminative rather than generative method, but only applied for supervised learning.

One may be inclined to cast our problem into the hierarchical classification framework \cite{MISC_010}, particularly \gls{hmlc} \cite{MISC_025} due to the apparent `multi-label' aspect of the mapping together with the two-level nature.
While hierarchical classification seeks to structure the space of labels hierarchically according to a fixed taxonomy, we stress that contrary to multi-label classification, in our setting, there is a single valid label, but there is uncertainty on which one it is (due to the coarse labelling imposed by the schema).
Moreover, while we seek mandatory leaf-node predictions \cite{MISC_010}, we do not require specification of the full label hierarchy for each sample which to our knowledge has not been tackled before.
Finally, our model focuses on handling multiple annotators and their uncertainty.

\section{Description of the Data}
\label{S_DATA_DESCRIPTION}
We tested our Extended \gls{nam} on a social behaviour-phenotyping data-set for caged mice, obtained from \gls{mrc} as documented in \cite{CBD_026}.
This consists of 27.5 hours of annotated behaviour of various mice kept in cages of three.
Each 30-minute segment was individually labelled by two or three annotators from a pool of eleven individuals, with responsibility for annotating changing between segments.

Labelling involves specifying the start/end-times of exhibited behaviours, which periods are then aligned to 1-second boundaries.
The data contains periods for which no label is given (`unlabelled samples'): this is because the annotators were explicitly instructed not to annotate observations if they cannot be coerced to any of the available behaviours or if they were unsure about it.
The annotations follow one of four schemas (denoted \textbf{I}, \textbf{II}, \textbf{III} and \textbf{IV}), containing the labels shown in Fig.\ \ref{FIG_ANNOTATION_SCHEMAS}.
The schemas are consistent within a segment (\ie all annotators use the same schema) but change between segments.
The goal of the study is to infer the `true' latent behaviour of the mice given the observations, which can then be used for example in the analyse of phenotype differences between strains (although in this paper, all wild-type strains were used).
In this scenario, the need for a holistic model is even more significant since some labels are missing entirely from some schemas, and hence a model trained solely on data from a particular schema would miss potentially significant behaviour.

\begin{figure}[!ht]
\centering
\includegraphics[width=\columnwidth]{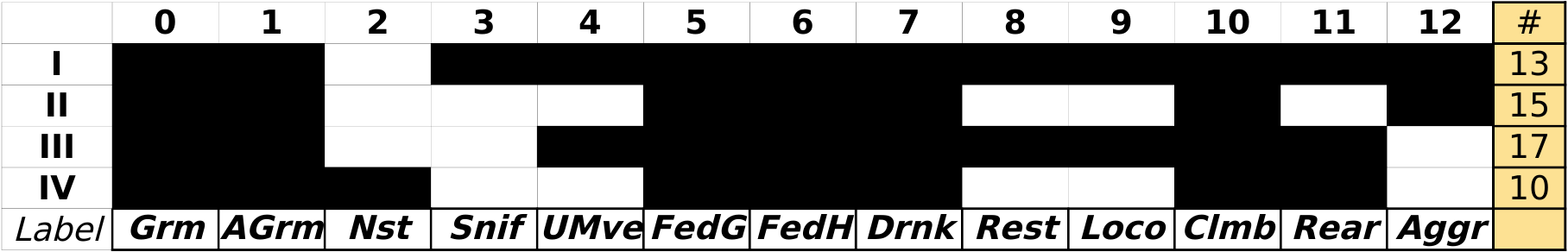}
\caption{The Behavioural Annotation Schemas used in this project, with \emph{black} cells indicating which labels (numerical representation, top row) are present in which schema (first column). The last row, marked \emph{(Label)} is our short-hand notation for referring to the labels while the last column indicates the number of segments in our data-set corresponding to each schema.}
\label{FIG_ANNOTATION_SCHEMAS}
\end{figure}

Since the schemas used did not have an explicit label to indicate a behaviour not in the label-set (\gls{nis}), we had to infer this from the unlabelled data.
We distinguish between two cases of such samples:
\begin{enumerate}
\item Informative Unlabelling, which arises from the observed behaviour not being in the schema (translating to \gls{nis}), \textsl{and}
\item Missing Data, \ie where the annotator was unsure about how to label a behaviour.
\end{enumerate}
We assign \gls{nis} only to those time-points where all responsible annotators do not give a label, treating all other unlabelled samples as \gls{mar} \cite{MISC_011}.
This is based on the assumption that in our laboratory setting, the annotators are adequately trained, and hence, the probability of all responsible annotators not providing a label is close to insignificant.
This was indeed verified by recording the fraction of time-points with no labelling by schemas, and saw that this is correlated with the schema (dropping to virtually 0 for Schemas I and III).

\section{Experimental Analysis}
\label{S_EXPERIMENTS}

We now document the empirical results which serve to illustrate the validity of the \gls{isac} method. Specifically we seek to answer two questions:
\begin{enumerate*}[label=(\alph*)]
\item can such a model be learnt under the condition of disparate schemas, \textsl{and, if so,}
\item is there merit to using \gls{isac} over just discarding incompatibly annotated data?
\end{enumerate*}
To this end, we report two main experiments.
Lacking any ground truth in the real data, we first evaluate the observed-data log-likelihood under both our Extended \gls{nam} and individual \gls{nam} models trained on each schema in \sect \ref{SS_EVAL_REAL}.
Next we analyse the ability of our model to learn the true data-generating process by evaluating the predictive performance on synthetic data for which ground-truth is available (\sect \ref{SS_LATENT_INFER}).
We also provide results on parameter recovery (\sect \ref{SS_PARAM_ESTIM}) as well as an information-theoretic analysis of the schema adapter (\sect \ref{SS_MUTUAL_INFO}).
In all our tests (except for the parameter recover), we train on a portion of the data and report measures on `unseen' data using cross-validation.

All our experiments were carried out on a desktop running Ubuntu Linux (18.04), with an Intel Xeon E3-1245 processor (4-cores) running at 3.5GHz, and 32Gb of memory.
The longest experiment (latent-state synthetic inference for the full data size repeated 20 times, each with 10-fold cross-validation) took about 6 hours. The code to replicate all results is available at \url{https://github.com/michael-camilleri/ISAR-Inter_Schema_AdapteR}

\subsection{Experimental Setup}
\label{SS_EXP_SETUP}

We explored training the models from multiple random restarts.
However, extensive testing indicated that starting from a diagonally-biased emission matrix ($\psi$) provided consistently better validation-set likelihoods: paired t-test with 164 \acrshort{dof} yielded a t-statistic of 2.90 (p=0.004) when compared to the best of 30-random restarts.
We hence initialised $\psi$ as a strongly diagonal matrix by adding a uniform matrix of 0.01 entries to the Identity matrix, and then normalising across $u$ to produce valid probabilities.
This encodes our belief that most annotators are consistent in their labelling (\ie most of the probability mass is on the diagonal).
It also provides the added benefit that the latent-states are `naturally' identifiable, avoiding the `label-switching' issue \cite{AMD_021} in the latent space since it biases the search in the vicinity of the identity permutation.
The prior $\pi$ was initialised to the uniform distribution (\ie all states equally likely).
In all cases, we used symmetric Dirichlet priors ($\alpha=2$) on the parameters $\pi$ and $\psi$.

\subsection{Likelihood-based Evaluation on Real Data}
\label{SS_EVAL_REAL}

We first evaluate our architecture on the motivating task of inferring latent mouse-behaviour from noisy annotations, and compare it to the \gls{nam} baseline trained independently per schema.
In the absence of ground-truths, we evaluate both models based on the evidence log-likelihood on out-of-sample data, using cross-validation with 11-folds\footnote[1]{The reason for this count is due to the natural groupings of segments in the available data.} (training on ten and evaluating metrics on the remaining one).
When training the \gls{nam} model, the individual schemas were augmented with the \gls{nis} label, to provide an equivalent observed sample-space, and allow for a like-with-like likelihood comparison between the two models on the same label-space.
The folds were engineered to be as uniform in size as possible while separating different mice in different folds to achieve more generalisable performance measures.
We:
\begin{enumerate*}[label=(\alph*)]
\item use fixed-folds, to provide a fair comparison between models, \textsl{and}
\item we report and compare measures on a per-segment basis, since the \gls{nam} architecture can only be trained on a single-schema at a time.
\end{enumerate*}

Table \ref{TAB_VAL_NLL} reports schema-averages (across segments) for the per-sample evidence log-likelihood (where the log-likelihood is divided by the number of samples), and the global average (computed across all segments).
All likelihoods are higher (better) in the \gls{isac} case, indicating the ability of the model to share statistical strength across the schemas, including learning about annotators which would otherwise not be observed in some schemas.
Specifically, a paired t-test with 54 \acrshort{dof} (55 segments) indicated a significant increase in validation-set log-likelihood for the \gls{isac} model as compared to the \gls{nam} model: the result yielded a t-statistic of 5.78 (p=$3.89\times10^{-7}$).

\begin{table}[!ht]
\centering
\caption{Validation Average Log-Likelihoods (higher is better)}
\begingroup
\setlength{\tabcolsep}{5pt}
\begin{tabular}{@{}lcccccc@{}} \toprule[1.5pt]
			& \multicolumn{4}{c}{\textbf{Schema}}			&  					&					\\ \cmidrule(r){2-5}
			& I			& II		& III		& IV		& \textbif{Mean}	&	\textbif{Std}	\\ \midrule
\gls{nam}	& -3.27		& -3.23		& -2.65		& -5.07		& -3.40				& 1.20				\\
\textbf{\gls{isac}}	& \textbf{-2.68}  	& \textbf{-2.29}  	& \textbf{-2.48}		& \textbf{-2.01}		& \textbf{-2.39}				& \textbf{0.82}				\\ \bottomrule[1.5pt]
\end{tabular}
\endgroup
\label{TAB_VAL_NLL}
\end{table}

\subsection{Latent State Inference in Synthetic Data}
\label{SS_LATENT_INFER}

While the real data lacks ground-truth with regards to the true mouse behaviour, we can simulate data using the parameters learnt above (to be as realistic as possible) and evaluate the \gls{map} `predictive' performance on it.
Note that in this scenario, we cannot compare \gls{isac} to the \gls{nam} model trained individually per-schema, since in every schema, \gls{nam} does not have knowledge of the entire label-set.
In effect, the \gls{nam} model cannot be used in such a scenario to give true predictions.
A naive alternative is to clump together all the samples as if they come from the same schema, and treat \gls{nis} as missing data.
This is based on the clearly incorrect assumption that the missing data is \gls{mar} and can thus be ignored, which will in general lead to inferior results.
It does however provide a baseline comparison.

In order to test the merits of \gls{isac} under a number of statistical conditions, we performed a study in which we simulated different data generation conditions.
The full details of the experimental procedure as well as the results are given in Appendix \ref{APP_ABLATION_STUDY}: below we report the case on the statistics which most closely match our dataset.
We ran the experiment 20 times, with 10-fold cross-validation in each run and report the macro-averaged F1-score (computing F1-score for each class independently and then averaging \cite[pp.~ 185]{MISC_013}) and predictive log-likelihood (log-likelihood assigned to the true label) in Table \ref{TAB_ACCURACY_BELIEF}.
We prefer the macro-averaged F1 score over accuracy, as we have high class imbalance, but care about each label equally. Note how in both metrics \gls{isac} shows consistently and (statistically) significantly better performance.

\begin{table}[!ht]
\centering
\caption{Macro F1 and predictive log-likelihood for the \gls{isac} and \gls{nam} models applied to synthetic data.}
\begingroup
\setlength{\tabcolsep}{5pt}
\begin{tabular}{@{}lcccccc@{}} \toprule[1.5pt]
 			& \multicolumn{3}{c}{Log-Likelihood}						    			& \multicolumn{3}{c}{Macro F1}	                                  \\\cmidrule(l){2-4}\cmidrule(l){5-7}
 			& \textbf{Mean}	    & \textbf{Std}	& \textbf{\textit{p}-value}				& \textbf{Mean}	    & \textbf{Std}	& \textbf{\textit{p}-value}	\\\midrule
\gls{nam} 	& -0.61			    & 0.09			& \multirow{2}{*}{6.1$\times10^{-16}$}	& 0.785			    & 0.02			& \multirow{2}{*}{3.2$\times10^{-26}$}	\\
\gls{isac} 	& \textbf{-0.35}	& \textbf{0.04} &  										& \textbf{0.864}	& 0.02	        & \\\bottomrule[1.5pt]
\end{tabular}
\endgroup
\label{TAB_ACCURACY_BELIEF}
\end{table}

\subsection{Parameter Recovery from Synthetic Data}
\label{SS_PARAM_ESTIM}

Another indicator of performance is the ability of our architecture to learn the `true' parameters which generate the data. We again generated synthetic data from known fixed values for $\Theta=\lbrace\pi, \psi\rbrace$ (obtained from the parameters trained on the real-data), and trained our model on it. While space precludes us from a full treatment of these results here, we observed convergence towards the same $\pi$ identified by using the full schema (up to 2.3\% error) even in extreme one-vs-rest schemas where the annotator only provides the presence/absence of a single label: $\psi$ was estimated to within 11.3\% of the true values under the \gls{mrc} schemas. More details are provided in Appendix \ref{APP_PARAM_RECOVERY}.

\subsection{Analysis of \acrlong{mi}}
\label{SS_MUTUAL_INFO}

We sought to explain the relative performance of the \gls{isac} architecture in terms of the \gls{mi} $\minfo{Z;Y}$ between the latent state $Z$ and sets of observations $Y$ from different schemas on the model fitted from the real data.
When using more than one schema we can also compute the Redundancy $R(Z;Y)$ \cite{MISC_012} between $Z$ and $Y$, where:
\begin{equation}
R(Z;Y) \equiv \sum_{s=1}^{|S|}\minfo{Z;Y_s} - \minfo{Z;Y_1,...,Y_{|S|}}
\end{equation}

\begin{table}
\centering
\caption{\acrlong{mi} $\mathit{\mathbf{I}}$ and Redundancy $R$ between the observations and the latent behaviour, under the effect of the different schemas. The statistics are reported across annotators.}
\begingroup
\setlength{\tabcolsep}{5pt}
\begin{tabular}{@{}lcccccc@{}} \toprule[1.5pt]
			& \multicolumn{3}{c}{$\minfo{Z;\bm{Y}}$}	& \multicolumn{3}{c}{$R(Z;\bm{Y})$}	\\ \cmidrule(lr){2-4} \cmidrule(l){5-7}
			& Mean		& Min		& Max		& Mean		& Min		& Max		\\ \midrule
$Y = U$		& 1.48		& 1.39		& 1.56											\\ \midrule
 I			& 1.48		& 1.39		& 1.56									 		\\
 II			& 0.87		& 0.73		& 0.93											\\
 III		& 1.47		& 1.38		& 1.56											\\
 IV			& 0.95		& 0.90		& 0.99											\\ \midrule
 I+III		& 1.73		& 1.67		& 1.78		& 1.22		& 1.09		& 1.34		\\
 II+IV		& 1.10		& 1.06		& 1.15		& 0.72		& 0.60		& 0.80		\\ 
 I+III+IV	& 1.78		& 1.71		& 1.82		& 2.12		& 1.93		& 2.29		\\ \midrule
 ALL		& 1.79		& 1.72		& 1.82		& 2.98		& 2.73		& 3.21		\\ \bottomrule[1.5pt]
\end{tabular}

\endgroup
\label{TAB_MUTUAL_INFO}
\end{table}

The resulting measures are shown in Table \ref{TAB_MUTUAL_INFO}.
We consider first the \gls{mi} for individual schemas in Table \ref{TAB_MUTUAL_INFO} (top). Note how schema \textbf{I} yields the same \gls{mi} as if we had access to the full label-set: this is because in \textbf{I} there is only one missing label, and hence the model correctly identifies \gls{nis} with that label. Looking at the individual schemas, we see that those with a smaller number of labels coded as \gls{nis} have a higher mutual information.

We next consider combinations of the schemas, a subset of which appear in Table \ref{TAB_MUTUAL_INFO}.
That is we potentially have observations from up to four schemas for the same underlying latent state.
The table shows that (as expected) increasing the number of schemas yields higher mutual information, up to the maximum from using all four schemas.
We can also measure the redundancy of the different schemas.
This shows that the schemas are redundant (rather than synergistic), which makes sense given the way the model is constructed.

\section{Conclusions}
\label{S_DISCUSSION}

In this paper we have presented a novel and effective solution to inferring latent variables from observations across different but related label-spaces (schemas).
We developed an inter-schema adapter (\gls{isac}), that allows us to build a holistic model and share statistical strength across disparately-labelled portions of the data-set.
We validated our model under both simulated and real-world conditions, for a behaviour annotation task.
The \gls{isac} model improved on the baseline \gls{nam} in terms of evidence log-likelihood with an increase from -3.40 to -2.39.
In simulated data, \gls{isac} achieved a 10\% increase in macro F1-score.

While above we assume that the samples are \gls{iid}, we can easily extend the unsupervised model to the temporal modelling domain: indeed, we investigated such an extension in \cite{MISC_009}.
We have constructed the schema adapter from knowledge of the schemas and how labels are mapped; however, it could be interesting to consider \emph{learning} the adapter if this information were not known.

Our model focused on the problem of inter-annotator variability under inconsistent schemas.
However, due to the `plugin' nature of our adapter, the model is amenable to extensions which take into account for example task difficulty \cite{AMD_013} or shared latent-structure across the annotators \cite{AMD_010}.

\subsection*{Acknowledgements}
We thank our collaborators at \acrlong{mrc}, especially Dr Sara Wells, Dr Pat Nolan, Dr Rasneer Sonia Bains and Dr Henrik Westerberg for providing and explaining the data set.
MC's work was supported in part by the EPSRC Centre for Doctoral Training in Data Science, funded by the UK Engineering and Physical Sciences Research Council (grant EP/L016427/1) and the University of Edinburgh. 
The work of CW was supported in part by The Alan Turing Institute under the EPSRC grant EP/N510129/1.

\subsection*{Ethical Approval}
While none of the authors were involved in the data collection, all procedures and animal studies in the behavioural-characterisation data-set used were carried out in accordance with the Animals (Scientific Procedures) Act 1986, UK, Amendment Regulations 2012 (SI 4 2012/3039) as indicated in \cite{CBD_026}.

\appendix

\section{Extended Proofs}
\begin{lemma}
\label{LEM_SUPER_SPACE}
Let $Y_k^{'(n)}$ be a 1-Hot encoded variable, where the sample-space is denoted $L_{S^{(n)}}$: \ie this may vary between samples/annotators. This is equivalent to representing $Y_k^{(n)}$ by a fixed sample-space, where the probability of emitting $Y_y^{(n)}$ for some $y \notin L_{S^{(n)}} = 0$.
\end{lemma}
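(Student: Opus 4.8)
The plan is to read ``equivalent'' as equality of the induced probability mass functions on all realizable outcomes, so that every downstream quantity --- the joint in Eq.~\ref{EQ_ISAC_LIKELIHOOD}, the \gls{isac} message in Eq.~\ref{EQ_ISAC_MSG_OMEGA}, and hence the posterior over $Z$ --- is unchanged between the two representations. First I would make the fixed sample-space precise: the super-space $Y$ is constructed to contain every element of the full label-set $L$ together with every valid grouping thereof, so that for each schema $s$ the label-set $L_s$ embeds as a subset $L_s \subseteq Y$ via the natural inclusion $\iota_s$ identifying each (super-)label of $L_s$ with its corresponding element of $Y$.

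Next I would push the variable-space distribution forward along $\iota_s$. Given the realized schema $S^{(n)}$, the variable-space variable $Y_k^{'(n)}$ carries a categorical distribution supported on $L_{S^{(n)}}$ with masses $p_y$ summing to one. I then define the fixed-space variable $Y_k^{(n)}$ on all of $Y$ by $\Prob{Y_k^{(n)} = y} = p_y$ for $y \in L_{S^{(n)}}$ and $\Prob{Y_k^{(n)} = y} = 0$ for $y \notin L_{S^{(n)}}$. Two checks are then routine: normalization is preserved because only zero-mass entries are appended, and the masses on realizable outcomes agree by construction, so $\iota_s$ is a mass-preserving embedding and the two pmfs coincide on the observable support.

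I would then tie this embedding to the model by noting that the deterministic \gls{isac} table $\omega$ in Eq.~\ref{EQ_OMEGA_DEFINITION} already realizes exactly this zero-padding: for a label $y$ not captured under schema $s$ we have $\omega_{y,u,s}=0$ for every $u$, so the one-hot factor $\omega_{y,u,s}^{Y_{k,y}^{(n)}}$ in the message $M_\omega^{(n)}$ (Eq.~\ref{EQ_ISAC_MSG_OMEGA}) contributes zero to any configuration placing mass outside $L_{S^{(n)}}$. Substituting the fixed-space representation into Eq.~\ref{EQ_ISAC_LIKELIHOOD} therefore reproduces the variable-space likelihood term-for-term, completing the equivalence.

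The hard part will not be any calculation but stating the claim precisely enough to be non-vacuous: I must verify that the super-space is genuinely rich enough, \ie that every grouping occurring in any schema appears as a \emph{distinct} element of $Y$ so that no two schema-labels are conflated under the inclusions $\iota_s$. Granting this construction (which the super-label enumeration of Fig.~\ref{FIG_SCHEMA_EXAMPLE} supplies), the equivalence reduces to the elementary observation that a categorical distribution over a smaller sample-space is losslessly represented on any larger space by zero-padding, and that $\omega$ enforces precisely this padding.
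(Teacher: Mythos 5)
Your proof is correct, but it runs on a different engine from the paper's. You establish equivalence at the level of probability distributions: embed each $L_s$ into the fixed super-space, zero-pad the pmf, and then invoke the zero entries of $\omega$ (i.e.\ $\omega_{y,u,s}=0$ for $y\notin L_s$) to guarantee that the fixed-space model places no mass on out-of-schema emissions. The paper instead argues at the level of the message $M_\omega^{(n)}$ (Eq.~\ref{EQ_ISAC_MSG_OMEGA}): it splits the product over the super-space into in-schema and out-of-schema factors and observes that every out-of-schema factor carries exponent $Y_{k,y}^{(n)}=0$ --- such labels are never observed --- so each such factor equals $1$ and drops out of the product, irrespective of the value of $\omega_{y,u,s}$ there. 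The distinction is not cosmetic: the paper explicitly remarks after its proof that $\omega_{y,u,s}=0$ for $y \notin L_{S^{(n)}}$ is \emph{not required} for the equivalence (it is merely enforced so the model does not expend probability mass on impossible combinations), whereas in your argument those zeros are the crux. What your route buys is a cleaner generative statement: with the zero-padded $\omega$, the two representations coincide as probability measures, not just in the likelihood they assign to realizable data. What the paper's route buys is robustness: the observed-data likelihood, and hence all inference, is unchanged no matter how $\omega$ is filled in outside the schema, because the one-hot exponents of never-observed labels annihilate those factors. Note also that your final step (``reproduces the variable-space likelihood term-for-term'') implicitly uses exactly this exponent-zero fact, since the zeros of $\omega$ alone only control configurations that are never realized; making that explicit would close the one small gap between your distributional argument and the term-by-term identity you assert.
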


\begin{proof}
\begin{align*}
M_{\omega}^{(n)}\left(k,u\right) & = \prod_{s=1}^{|S|} \left[ \prod_{y=1}^{|Y|} \omega_{y,u,s}^{Y_{k,y}^{(n)}} \right]^{S_{k,s}^{(n)}} = \prod_{s=1}^{|S|} \left[ \prod_{y\in Y^{(n)}} \omega_{y,u,s}^{Y_{k,y}^{(n)}} \prod_{y\notin Y^{(n)}} \omega_{y,u,s}^{Y_{k,y}^{(n)}} \right]^{S_s^{(n)}}
\end{align*}
However, for the second set of products, $Y_{k,y}^{(n)} = 0$ by definition, since it is never observed. Hence,
\begin{align*}
M_{\omega}^{(n)}\left(k,u\right) & = \prod_{s=1}^{|S|} \left[ \prod_{y\in Y^{(n)}} \omega_{y,u,s}^{Y_{k,y}^{(n)}} \times \bm{1} \right]^{S_{k,s}^{(n)}} = \prod_{s=1}^{|S|} \left[ \prod_{y\in Y^{(n)}} \omega_{y,u,s}^{Y_{k,y}^{(n)}} \right]^{S_{k,s}^{(n)}}
\end{align*}
\end{proof}
Note that while we do not require that $\omega_{y,u,s}=0\  \forall y \notin L_{S^{(n)}}$, this is enforced to avoid the model expending probability mass on impossible combinations.

\section{Additional comparisons between \acrshort{isac} and \acrshort{nam}}
\label{APP_ABLATION_STUDY}

We evaluated the Extended and baseline \gls{nam} models through the macro F1-score, raw accuracy and predictive log-likelihood in synthetic experiments.
While the F1 and likelihood scores provide the best comparison, the accuracy is also reported as a more challenging baseline to beat (since it is generally easy to achieve high accuracy using poor models on very unbalanced datasets such as ours).
In each experiment, we simulated 20 independent runs, and evaluated the metrics on a hold-out set using 10-fold cross validation.
For most of the experiments, we used a \textsl{reduced} data-set size of 60 segments of 100 samples each, allowing us to test various conditions quickly (we show that this alone does not significantly impact our results, see first and second rows of Table \ref{TAB_ABLATION}).
We tested the effect of a \textsl{uniform} distribution over latent-states, $\Pi$ sampled (once) from a \textsl{Dirichlet} prior with $\alpha=10$ as well as schema distributions \textsl{Biased} towards the less informative ones (in the ratio 1:10:1:10).
Note that in this latter case, the number of segments was increased to 80 (since otherwise certain schemas do not appear in some folds).
$p$-Values corresponding to paired t-tests with 19-\gls{dof} (20 independent runs) are reported in all cases.

\begin{table}[!ht]
\centering
\caption{Predictive log-likelihood, F1 and accuracy for the \gls{isac} and \gls{nam} models under different conditions. In the interest of avoiding clutter, we omit $p$-Values for the accuracy, but in all cases, it was less than 1$\times10^{-16}$}
\begingroup
\setlength{\tabcolsep}{3pt}

\begin{tabular}{@{}llcccccccc@{}} \toprule[1.5pt]
 						 	&				& \multicolumn{3}{c}{Log-Likelihood}						                & \multicolumn{3}{c}{Macro F1}	                                            & \multicolumn{2}{c}{Accuracy(\%)}  \\\cmidrule(l){3-5}\cmidrule(l){6-8}\cmidrule(l){9-10}
 						 	&				& \textbf{Mean}	    & \textbf{Std}	& \textbf{\textit{p}-value}	            & \textbf{Mean}	    & \textbf{Std}	& \textbf{\textit{p}-value}	            & \textbf{Mean} & \textbf{Std}  \\\midrule
\multirow{2}{*}{Realistic}  & \gls{nam} 	& -0.61			    & 0.09			& \multirow{2}{*}{6.1$\times10^{-16}$}  & 0.785			    & 0.02			& \multirow{2}{*}{3.2$\times10^{-26}$}  & 82.5          & 2.13          \\
			    		 	& \gls{isac} 	& \textbf{-0.35}	& \textbf{0.04} &  		                                & \textbf{0.864}	& 0.02	        & 		                                & \textbf{84.5} & \textbf{1.95} \\\midrule
\multirow{2}{*}{Reduced} 	& \gls{nam} 	& -0.60			    & 0.09			& \multirow{2}{*}{1.0$\times10^{-14}$}  & 0.757			    & 0.02			& \multirow{2}{*}{2.5$\times10^{-14}$}	& 82.6          & 2.05          \\
			    		 	& \gls{isac} 	& \textbf{-0.37} 	& \textbf{0.04} & 		                                & \textbf{0.803}	& 0.02	        & 		                                & \textbf{84.3} & \textbf{1.84} \\\midrule
\multirow{2}{*}{Uniform} 	& \gls{nam} 	& -0.92			    & 0.08			& \multirow{2}{*}{8.0$\times10^{-24}$}  & 0.726			    & 0.02			& \multirow{2}{*}{1.9$\times10^{-21}$}	& 73.8          & 1.99          \\
			    		 	& \gls{isac} 	& \textbf{-0.38} 	& \textbf{0.04} & 		                                & \textbf{0.840}    & 0.02	        & 		                                & \textbf{83.4} & \textbf{1.89} \\\midrule
\multirow{2}{*}{Dirichlet} 	& \gls{nam} 	& -0.94			    & 0.07  		& \multirow{2}{*}{1.6$\times10^{-24}$}  & 0.727			    & 0.02			& \multirow{2}{*}{2.2$\times10^{-23}$}	& 73.8          & 1.83          \\
			    		 	& \gls{isac} 	& \textbf{-0.36} 	& \textbf{0.04} & 		                                & \textbf{0.835}	& 0.02	        & 		                                & \textbf{84.8} & \textbf{1.75} \\\midrule
\multirow{2}{*}{Biased}   	& \gls{nam} 	& -1.83			    & 0.21			& \multirow{2}{*}{3.6$\times10^{-19}$}  & 0.533			    & 0.02			& \multirow{2}{*}{1.6$\times10^{-24}$}	& 35.7          & 2.32          \\
			    		 	& \gls{isac} 	& \textbf{-0.82} 	& \textbf{0.15} & 		                                & \textbf{0.667}	& \textbf{0.01}	& 		                                & \textbf{69.3} & \textbf{1.21} \\\midrule
Biased \& 				   	& \gls{nam} 	& -1.60			    & \textbf{0.11}	& \multirow{2}{*}{9.1$\times10^{-10}$}  & 0.585			    & 0.02			& \multirow{2}{*}{5.5$\times10^{-19}$}	& 62.5          & \textbf{1.05} \\
Uniform		    		 	& \gls{isac} 	& \textbf{-0.82} 	& 0.37          & 		                                & \textbf{0.709}	& 0.02	        & 		                                & \textbf{71.9} & 1.34          \\\bottomrule[1.5pt]
\end{tabular}
\endgroup
\label{TAB_ABLATION}
\end{table}

\section{Parameter Recovery Curves}
\label{APP_PARAM_RECOVERY}

We carried out simulation experiments of the ability of the model to recover the `true' parameters, under a number of scenarios.
In each case, datasets were generated according to the parameters as learnt from the \gls{mrc} data, and subsequently we retrained the model from scratch, using successively larger portions of the dataset.
Each experiment was repeated 20 times, with noisy perturbation ($\sim \text{Unif}[0, 0.05]$) in the underlying prior/emission probabilities.
We evaluated the quality of the estimate with the \gls{rad} between the true $\Theta$ and learnt $\hat{\Theta}$ parameters, using the mean magnitude of individual probabilities as the normaliser:
$$ \gls{rad} = \frac{|\hat{\Theta} - \Theta| \times 100 \%}{mean(\Theta)} .$$
We prefer this over the KL-Divergence as it is more readily interpretable.

In the first case (Fig.\ \ref{FIG_EXTREME_RAD}) we experimented with an extreme scenario where each schema indicates only the presence/absence of a single label (\ie a One-vs-Rest schema).
To reduce the complexity of the problem, we generated data using the first six of the original 11 annotators, seven of the original 13 labels, and with maximum sample sizes of 500 segments of 100 time-points each.
Annotators and schemas were drawn from uniform distributions.
We investigated scenarios where (a) all responsible annotators use the same schema within a sample, and (b) annotators may use different schemas even within the same sample.

\begin{figure}[!ht]
\centering
	\begin{subfigure}[b]{0.49\columnwidth}
       	\includegraphics[width=\textwidth]{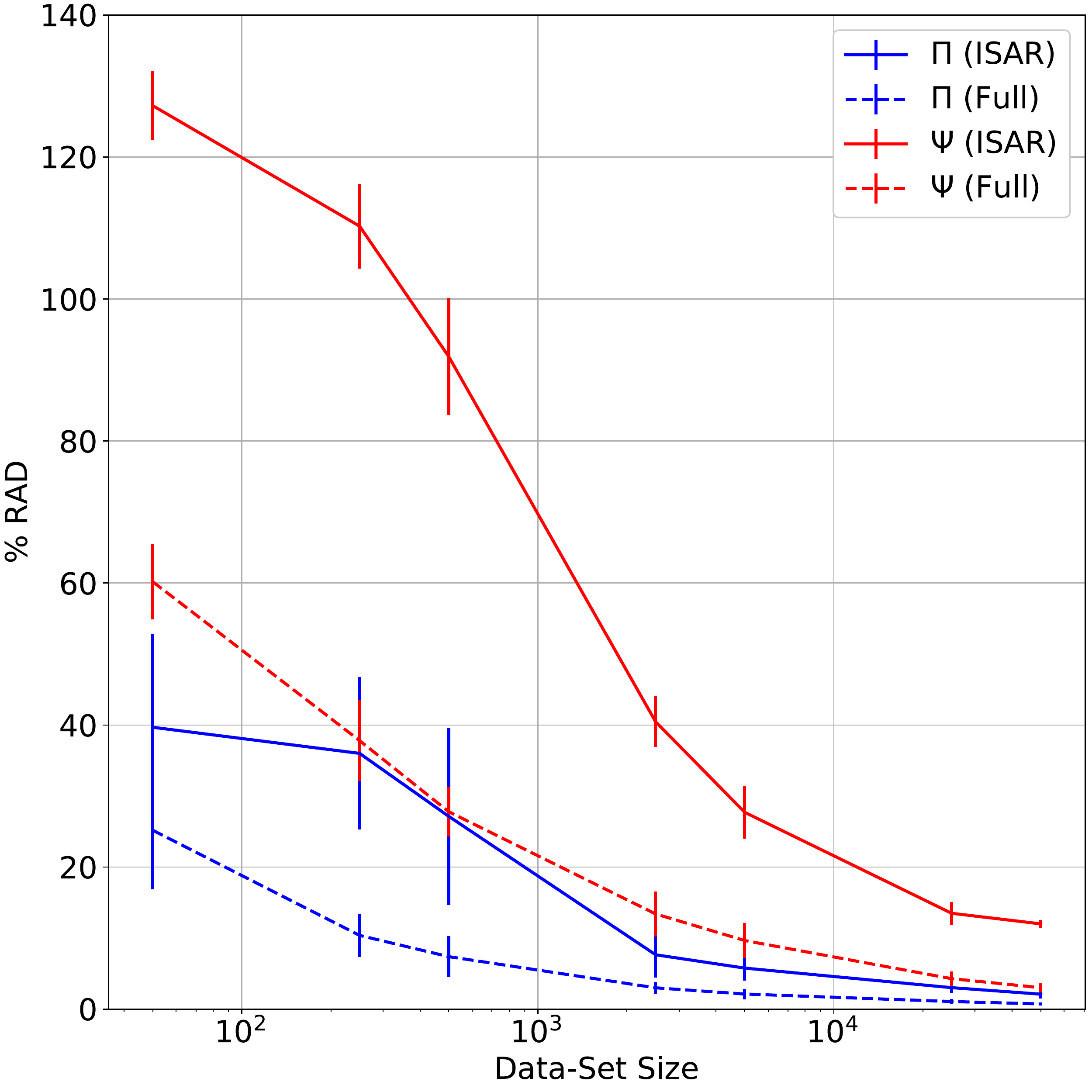}
        \caption{Same Schema per Annotator}
    \end{subfigure}
	\begin{subfigure}[b]{0.49\columnwidth}
       	\includegraphics[width=\textwidth]{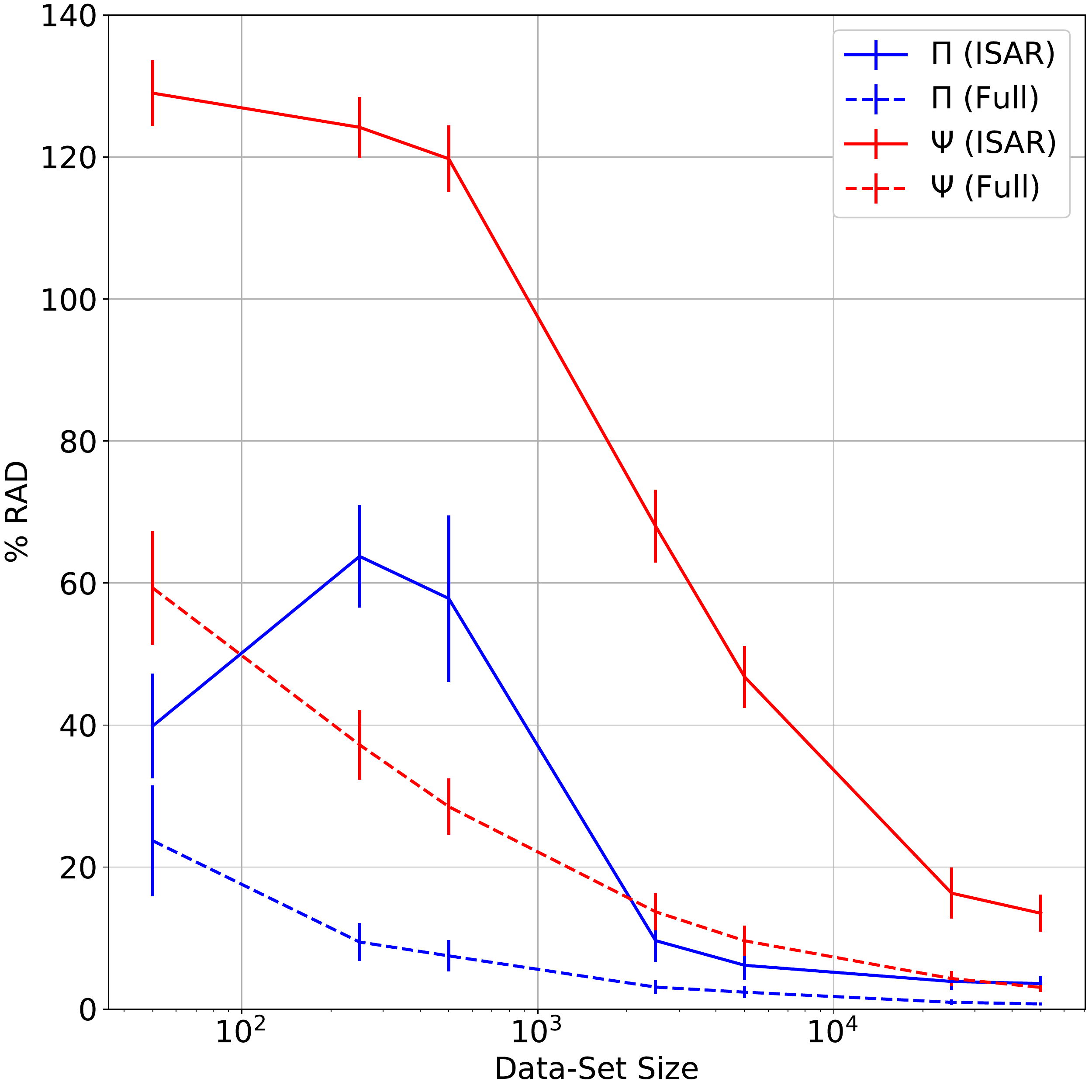}
        \caption{Potentially Different Schema}
    \end{subfigure}
\caption{\acrshort{rad} as a function of data-set size for the One-vs-Rest schemas. The error-bars indicate one standard deviation across runs. The initial increase in error in (b) is due to the interplay between the `prior' counts becoming insignificant, but there not being enough data to get a true estimate of the probabilities (due to label imbalance).}
\label{FIG_EXTREME_RAD}
\end{figure}

In the second case we used the same setup as in the real data, \ie the four schemas in the \gls{mrc} dataset, the full annotator/label-set and the full data-set size. This is shown in Fig.\ \ref{FIG_RAD_ACTUAL_SYNTHETHIC}.

\begin{figure}
\centering
	\begin{subfigure}[b]{0.49\textwidth}
       	\includegraphics[width=\textwidth]{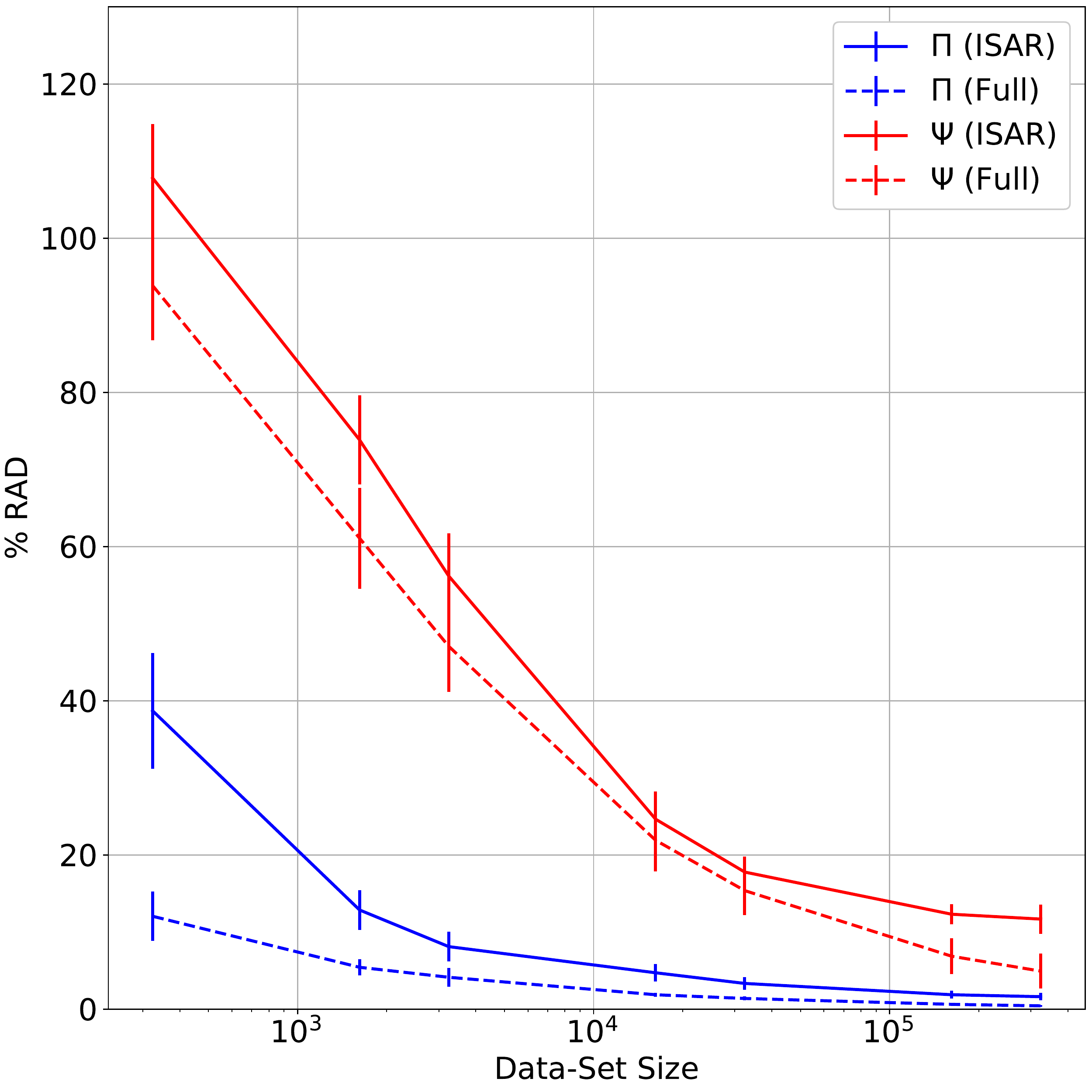}
        \caption{\gls{rad} for $\pi$ and $\psi$}
    \end{subfigure}
	\begin{subfigure}[b]{0.49\textwidth}
       	\includegraphics[width=\textwidth]{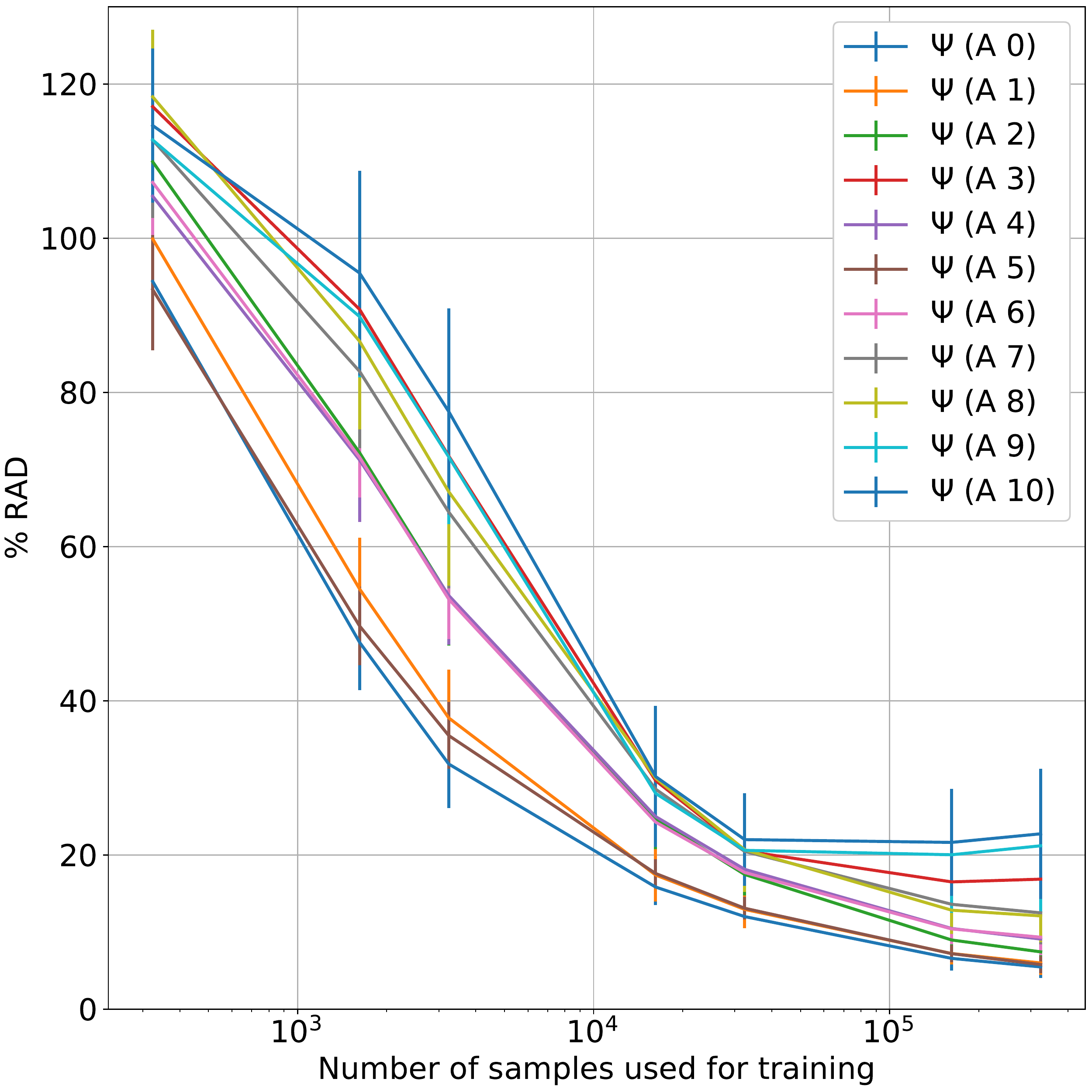}
        \caption{\gls{rad} per $\psi_k$}
    \end{subfigure}
\caption{\gls{rad} for Simulation runs based on actual data parameters.}
\label{FIG_RAD_ACTUAL_SYNTHETHIC}
\end{figure}

\end{document}